\crefname{section}{\S}{\S\S}
\Crefname{section}{Section}{Sections}
\Crefname{table}{Table}{Tables}
\crefname{table}{Tab.}{Tabs.}
\DeclareMathOperator*{\argmin}{arg\,min}
\DeclareMathOperator*{\argmax}{arg\,max}
\newcommand{\R}{\ensuremath{\mathbb{R}}}
\DeclareMathSymbol{\shortminus}{\mathbin}{AMSa}{"39}
\newcommand{\sxtimes}{\mathsf{x}}%\mskip1mu
\newcommand{\T}{\ensuremath{^\intercal}}
\newcommand{\ie}{\textit{i}.\textit{e}., }
\newcommand{\eg}{\textit{e}.\textit{g}., }
\newenvironment{proof}[1]{\par\noindent\underline{Proof:}\space#1}{\hfill $\blacksquare$}
\newtheorem{claim}{Claim}[section]
\newtheorem{definition}{Definition}[section]
\newtheorem{lemma}{Lemma}[section]
\newtheorem{apclaim}{Claim}
\newtheorem{apdefinition}{Definition}
\newtheorem{aplemma}{Lemma}
\numberwithin{equation}{section}
\def\BState{\State\hskip-\ALG@thistlm}
\DeclareRobustCommand\onedot{\futurelet\@let@token\@onedot}
\def\@onedot{\ifx\@let@token.\else.\null\fi\xspace}
\def\eg{\emph{e.g}\onedot} 
\def\ie{\emph{i.e}\onedot}
\def\yeti{Yeti~Z.~G\"{u}rb\"{u}z}
\def\aydin{A.~Ayd{\i}n~Alatan}
\def\inst1{Dept. of Elect. and Elec. Eng., Middle East Technical University, Ankara, Turkey}
\def\email1{yeti@metu.edu.tr}
\def\email2{ogul.can@metu.edu.tr}
\def\email3{alatan@metu.edu.tr}
\def\email4{ada.gorgun@metu.edu.tr}
\name{\yeti\textsuperscript{$\dagger$}\thanks{\textsuperscript{$\dagger$}Affiliated with OGAM-METU during the research.} \qquad \aydin\textsuperscript{$\ddagger$}}
\address{\textsuperscript{$\dagger$}RSiM, Technische Universit\"at Berlin, DE\quad\textsuperscript{$\ddagger$}Center for Image Analysis (OGAM), METU, TR}
\begin{document}

% Title
% ------
\title{Generalizable Embeddings with Cross-batch Metric Learning}
\maketitle

% Abstract
% ---------
\begin{abstract}
Global average pooling (GAP) is a popular component in deep metric learning (DML) for aggregating features. Its effectiveness is often attributed to treating each feature vector as a distinct semantic entity and GAP as a combination of them. Albeit substantiated, such an explanation's algorithmic implications to learn generalizable entities to represent unseen classes, a crucial DML goal, remain unclear. To address this, we formulate GAP as a convex combination of learnable \textit{prototypes}. We then show that the prototype learning can be expressed as a recursive process fitting a \textit{linear predictor} to a batch of samples. Building on that perspective, we consider two batches of \textit{disjoint} classes at each iteration and regularize the learning by expressing the samples of a batch with the prototypes that are fitted to the \textit{other batch}. We validate our approach on 4 popular DML benchmarks.
\end{abstract}

% Note that keywords are not normally used for peerreview papers.
\begin{keywords}
Metric learning, zero-shot learning
\end{keywords}

\fancypagestyle{firststyle}
{
   \fancyhead{}
   \lhead{Accepted as a conference paper at ICIP 2023}
   \renewcommand{\headrulewidth}{0pt} % removes horizontal header line
}
\thispagestyle{firststyle}

% Introduction
% -------------
\section{Introduction}
\label{sec:introduction}
Deep metric learning (DML) considers image-label pairs $(I,L)$ and aims to learn an embedding function $I\rightarrow y$ that maps images $I$ to vectors $y$ such that the Euclidean distance in the space of embeddings is consistent with the label information. More specifically, $\Vert y_i \shortminus y_j\Vert_2$ is small whenever $L_i=L_j$, and large whenever $L_i \neq L_j$. To enable learning, this requirement is represented via loss function $\ell((y_i, L_i), (y_j,L_j))$ (\eg, \textit{contrastive} \cite{wu2017sampling}, \textit{triplet} \cite{schroff2015facenet}, \textit{multi-similarity} \cite{Wang_2019_CVPR_MS}) and the typical learning mechanism is gradient descent of an empirical risk function defined over a batch of data points: $\mathcal{L}_{\text{DML}} \coloneqq \Sigma_{ij}\ell((y_i, L_i), (y_j,L_j))$.

 Primary thrusts in DML include tailoring pairwise loss terms \cite{musgrave2020metric}, pair mining \cite{roth2020revisiting} and data augmentation with either synthesizing informative samples \cite{venkataramanan2022it} or with virtual embeddings called \textit{proxies} \cite{kim2020proxy, teh2020proxynca++}. To improve generalization; training strategies upon characterization of the generalization bounds \cite{gurbuz2021asap}, separating unique and shared characteristics among classes,
 \cite{Roth_2019_ICCV}, intra-batch feature aggregation \cite{intrabatch}, ranking surrogates \cite{profs,patel2022recall}, further regularization terms \cite{roth2022non}, and various architectural designs such as ensemble \cite{zheng2021deep} and multi-task \cite{milbich2020diva,roth2021s2sd} models are utilized in the prolific DML literature. A shared component of these diverse methods is the embedding function which is a convolutional neural network (CNN) followed by global average pooling (GAP) \cite{musgrave2020metric}.
%lim2022hypergraph: intra batch

% Below is an example of how to insert images. Delete the ``\vspace'' line,
% uncomment the preceding line ``\centerline...'' and replace ``imageX.ps''
% with a suitable PostScript file name.
% -------------------------------------------------------------------------
\begin{figure}[t]
  \centering
  \centerline{\includegraphics[width=0.782\linewidth]{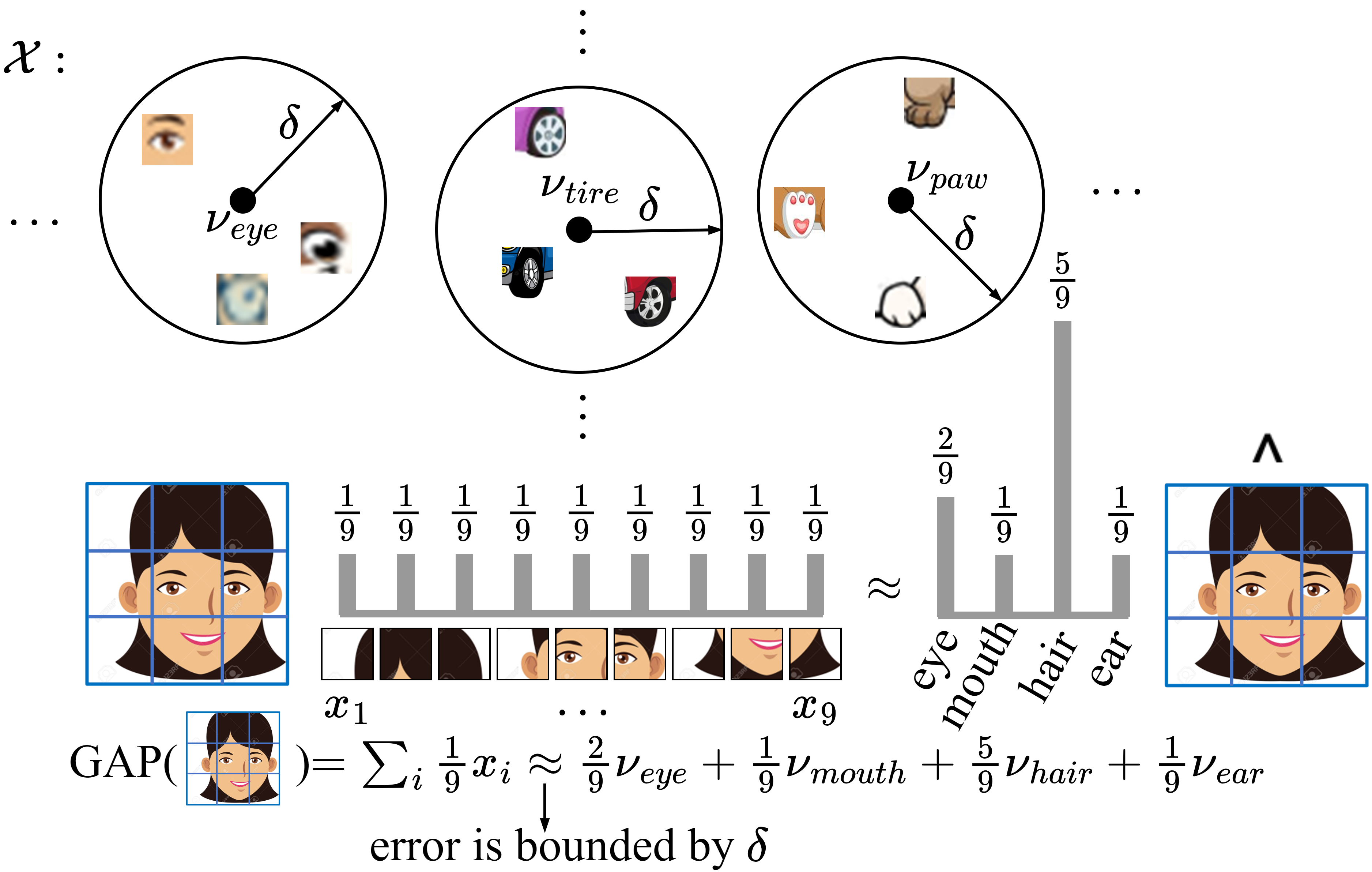}}
  \caption{Visualization of \cref{thm:gap_proto} on which our method is built. } %We argue that learning prototypes corresponding to shared attributes yields generalizable embeddings.
	\label{fig:motive}
  \end{figure}

%
%\end{wrapfigure}

Though simple, GAP is a highly effective way to aggregate information. Empirically validated folklore \cite[and references therein]{zhou2018interpreting} to explain the effectiveness of GAP is considering each pixel of the CNN feature map as corresponding to a separate semantic entity and GAP as the combination of them \cite{gurbuz2019novel}. A critical desiderata of DML is generalizing the learned embedding function to unseen classes. Thus, the learned semantic entities should be able to express novel classes, \eg, learning \emph{"tire"} and \emph{"window"} to represent \emph{"car"} instead of learning \emph{"car"}. However, no explicit mechanism exists in the current DML approaches to enforce this behavior. Moreover, supervised DML losses provide guidance for seen classes, that yields entities fitted to classes and possibly hinders generalization capability. In this paper, we address explicitly learning generalizable semantic entities in the context of GAP.  

Briefly, our contributions include the following; $i)$ we formulate GAP as a convex combination of learnable prototypes (\cref{fig:motive}) to enable explicit learning of the semantic entities, $ii)$ we show that the prototype learning can be expressed as a recursive process fitting a linear predictor to the batch of samples, and $iii)$ we tailor a regularization loss (\cref{fig:method}) built on expressing the set of classes with the prototypes fitted to another set of classes. Through rigorous experimentation, we validate our theoretical claims and demonstrate the effectiveness of our approach.

% Below is an example of how to insert images. Delete the ``\vspace'' line,
% uncomment the preceding line ``\centerline...'' and replace ``imageX.ps''
% with a suitable PostScript file name.
% -------------------------------------------------------------------------
\begin{figure*}[!ht]
  \centering
  \centerline{\includegraphics[width=1.\linewidth]{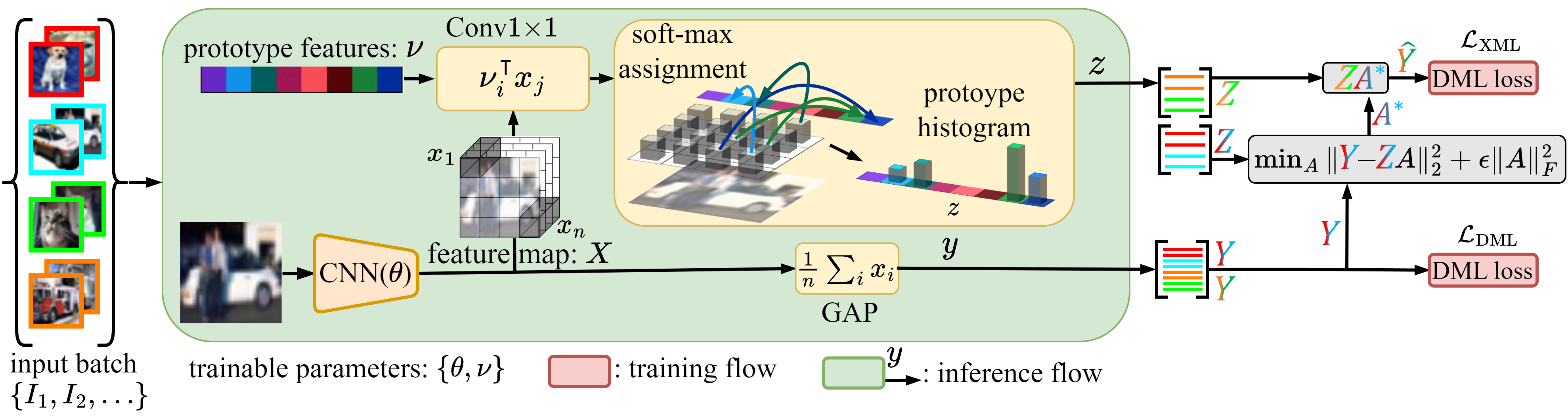}}
  \caption{Embodiment of the method, where the histograms $Z{=}[z_i]_i$  \eqref{eq:trainable_hist_operator} and GAP vectors $Y{=}[y_i]_i$ are coloured w.r.t. their class label.}
	\label{fig:method}
  \end{figure*}

%
%\end{wrapfigure}

% METHOD
% -------------
\section{Method}
We propose a regularization loss to learn transferable features. Our loss is built on solving a metric learning problem on a batch and then evaluate the learned metric on another batch of unseen classes. We first express GAP as a convex combination of learnable prototypes in \cref{sec:gap_pcc}. We then associate prototype learning with a recursive process fitting a linear predictor to a batch of samples in \cref{sec:rls}. Building on that, we formulate our loss in \cref{sec:xml}. We defer all the upcoming proofs to \hyperref[sec:appendix]{appendix}.

%\href{\appendixurl}{appendix} \cite{gurbuz_appendix}.

\subsection{GAP as Convex Combination of Prototypes}
\label{sec:gap_pcc}
We consider embedding functions that are implemented as CNN followed by GAP, \ie, $I\overset{\text{CNN}}{\longrightarrow}X\overset{\nicefrac{1}{n}\Sigma_i x_i}{\longrightarrow}y$ where $X$ is $w{\sxtimes}h$ feature map and $n=w{\sxtimes}h$. We introduce the following operator to compose a histogram representation from the collection of features.

\begin{definition}[\textbf{Histogram Operator}] \label{def:hist_operator} For $n$-many $d$-dimensional features $X{=}[ x_i{\in}\R^d ]_{i=1}^n$ and $m$-many prototype features $\mathcal{V}=[ \nu_i{\in}\R^d ]_{i=1}^m$ of the same dimension, the histogram of $X$ on $\mathcal{V}$ is denoted as $z^\ast$ which is computed as the minimizer of the following problem:
\begin{equation} \label{eq:hist_operator}
(z^\ast,\pi^\ast) =   \!\argmax_{z\in\mathcal{S}^m, \pi\geqslant0}
 \!\textstyle\sum_{ij}\nu_i\T x_j\pi_{ij} \text{ s.to }
{\small\begin{array}[t]{l} \Sigma_i\pi_{ij}{=}\nicefrac{1}{n}\\[-1ex]\Sigma_{j}\pi_{ij}{=}z_i \end{array}}
\end{equation}
where $\mathcal{S}^m \coloneqq \lbrace p \in \R^{m}_{\geqslant 0}\mid \Sigma_i p_i = 1  \rbrace$.
\end{definition}

\begin{claim}
The solution of the problem in \eqref{eq:hist_operator} reads:
\begin{equation}
    \pi^\ast_{ij} = \nicefrac{1}{n}\mathds{1}({i = \mathrm{argmax}_k\lbrace \nu_k\T x_j \rbrace})
\end{equation}
where $\mathds{1}(c)$ is 1 whenever $c$ is true and 0 otherwise.
\end{claim}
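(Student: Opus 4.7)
The key observation is that the variable $z$ can be eliminated from the optimization, after which the problem decouples across columns of $\pi$ into a collection of trivial linear programs. First I would note that the constraint $\sum_j \pi_{ij} = z_i$, combined with the column-sum constraint $\sum_i \pi_{ij} = \nicefrac{1}{n}$, automatically forces $\sum_i z_i = \sum_i \sum_j \pi_{ij} = \sum_j \sum_i \pi_{ij} = \sum_j \nicefrac{1}{n} = 1$. Since $\pi\geqslant 0$ also implies $z_i\geqslant 0$, every $\pi\geqslant 0$ satisfying the column constraints yields a $z\in\mathcal{S}^m$ via $z_i \coloneqq \sum_j \pi_{ij}$. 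Hence $z$ is a redundant variable and the feasible set is effectively just $\{\pi\geqslant 0 : \sum_i \pi_{ij}=\nicefrac{1}{n}\ \forall j\}$.

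Next I would exploit the separability of the resulting program. Writing $c_{ij}\coloneqq \nu_i\T x_j$, the objective becomes $\sum_j \big(\sum_i c_{ij}\,\pi_{ij}\big)$, and since the remaining constraints couple only entries within the same column $j$, the problem splits into $n$ independent linear programs, one per sample $j$:
\begin{equation*}
\max_{\pi_{\cdot j}\geqslant 0}\ \sum_i c_{ij}\,\pi_{ij}\quad\text{s.to}\quad \textstyle\sum_i \pi_{ij} = \nicefrac{1}{n}.
\end{equation*}
Each of these is a linear maximization of a linear functional over the scaled simplex $\nicefrac{1}{n}\cdot\Delta^{m-1}$, whose vertices are the scaled standard basis vectors $\nicefrac{1}{n}\,e_i$. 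A standard LP argument (or direct inspection) shows the optimum is attained at the vertex corresponding to the largest coefficient, i.e.\ at $i^\ast(j)=\argmax_k c_{kj}=\argmax_k \nu_k\T x_j$. This yields exactly $\pi^\ast_{ij}=\nicefrac{1}{n}\,\mathds{1}(i=\argmax_k \nu_k\T x_j)$, as claimed.

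The main, and essentially only, subtlety is the decoupling step: one must be careful that eliminating $z$ does not drop the simplex constraint. That is handled by the short accounting above showing $\sum_i z_i=1$ is implied by the column-sum constraints. A minor caveat concerns ties in the inner $\argmax$: if several indices $k$ achieve $\nu_k\T x_j$, the LP admits a whole face of optima and the stated $\pi^\ast$ is one canonical selection among them; this is standard and can be acknowledged with a single sentence rather than elaborated. No further machinery (duality, KKT, entropic relaxation, etc.) is needed, so the write-up should be short.
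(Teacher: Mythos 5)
Your proof is correct and follows essentially the same route as the paper's: both reduce the problem to independent per-column linear programs and show the mass must concentrate on the index maximizing $\nu_k\T x_j$ (the paper argues this by contradiction, you by the standard vertex-of-a-simplex LP argument, which amounts to the same thing). If anything, your write-up is slightly more careful than the paper's, since you explicitly verify that eliminating $z$ preserves the simplex constraint and you acknowledge the tie-breaking caveat, which the paper's strict inequality $(c_{i^\prime j}-c_{i^\ast j})>0$ silently assumes away.
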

In words, \emph{histogram operator} basically assigns each feature to their \textit{nearest} prototype and accumulates $\nicefrac{1}{n}$ mass for each assigned feature. %Thus, $p^\ast$ is a probability mass distribution and its masses are proportional to the number of features assigned to the prototypes.
We now consider a set of prototypes in the feature space $\mathcal{X}$ where the convolutional features $x_i$ lie. We consider $m$-many prototype features $\mathcal{V} = \lbrace \nu_i\rbrace_{i=1}^m$ so that the set $\mathcal{V}$ is $\delta$-cover of the feature space, $\mathcal{X}$. Namely, for any $x\in\mathcal{X}$, we have a prototype $\nu_x$ such that $\Vert x  \shortminus \nu_x\Vert_2 \leqslant \delta$.  

Given $n$-many convolutional features $X=[ x_i]_{i=1}^n$ we compute the histogram of $X$ on $\mathcal{V}$ (\ie, $z^\ast$) using \eqref{eq:hist_operator} and obtain global representation $\hat{y}$ as:
\begin{equation}\label{eq:pcc}
    \hat{y} = \textstyle\sum_{k=1}^m z^\ast_k \nu_k\quad.
\end{equation}
Note that GAP representation is $y=\nicefrac{1}{n}\Sigma_{i=1}^n x_i$. By the following lemma, we show that GAP  is \textit{approximately} equivalent to $\hat{y}$, \ie, \textit{convex combination of prototypes}. 

\begin{lemma}\label{thm:gap_proto}
Given $n$-many convolutional features $X=[ x_i{\in}\mathcal{X}]_{i=1}^n$ and $m$-many prototype features $\mathcal{V}=[ \nu_i]_{i=1}^m$ with $\lbrace \nu_i\rbrace_{i=1}^m$ being $\delta$-cover of $\mathcal{X}$. If $z^\ast$ is the histogram  of $X$ on $V$, defined in \eqref{eq:hist_operator}, then we have:
$$ \Vert \textstyle\sum_{i=1}^m z^\ast_i\nu_i \shortminus \textstyle\sum_{j=1}^n \tfrac{1}{n} x_j \Vert_2 \leqslant \delta $$
\end{lemma}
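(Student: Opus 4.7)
The plan is to unpack the statement using the explicit form of the optimal transport plan $\pi^\ast$ given by the preceding claim, which collapses the problem into a simple nearest-prototype assignment, and then apply the $\delta$-cover property term by term.

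First, I would substitute the claim's formula $\pi^\ast_{ij} = \tfrac{1}{n}\mathds{1}(i = \arg\max_k \nu_k^{\!\top} x_j)$ into the marginal constraint $z^\ast_i = \sum_j \pi^\ast_{ij}$. Writing $k^\ast(j) \coloneqq \arg\max_k \nu_k^{\!\top} x_j$ for the prototype index assigned to feature $x_j$, this gives $z^\ast_i = \tfrac{1}{n}\lvert\{j : k^\ast(j)=i\}\rvert$, and therefore
\begin{equation*}
\sum_{i=1}^m z^\ast_i \nu_i \;=\; \frac{1}{n}\sum_{j=1}^n \nu_{k^\ast(j)}.
\end{equation*}
This is a clean, per-sample rewrite that lets me compare the two sides of the target inequality on a $j$-by-$j$ basis.

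Next I would form the difference and apply the triangle inequality:
\begin{equation*}
\Big\Vert \sum_{i=1}^m z^\ast_i\nu_i - \frac{1}{n}\sum_{j=1}^n x_j \Big\Vert_2
\;=\; \Big\Vert \frac{1}{n}\sum_{j=1}^n \bigl(\nu_{k^\ast(j)} - x_j\bigr) \Big\Vert_2
\;\leqslant\; \frac{1}{n}\sum_{j=1}^n \bigl\Vert \nu_{k^\ast(j)} - x_j \bigr\Vert_2.
\end{equation*}
It then suffices to show $\Vert \nu_{k^\ast(j)} - x_j\Vert_2 \leqslant \delta$ for every $j$, after which averaging immediately yields the $\delta$ bound.

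The main obstacle is this last step, because the histogram operator selects $k^\ast(j)$ by maximising an inner product while the $\delta$-cover is stated in Euclidean distance. The cleanest way to bridge the two is to invoke the (implicit) normalisation of CNN features and prototypes common in DML: if $\Vert x_j\Vert_2$ and $\Vert\nu_k\Vert_2$ are constant across $j$ and $k$, then $\Vert x_j-\nu_k\Vert_2^2 = \mathrm{const} - 2\nu_k^{\!\top} x_j$, so $\arg\max_k \nu_k^{\!\top} x_j = \arg\min_k \Vert x_j - \nu_k\Vert_2$. By the $\delta$-cover assumption there exists some $\nu_{x_j}$ with $\Vert x_j - \nu_{x_j}\Vert_2 \leqslant \delta$, and since $\nu_{k^\ast(j)}$ is the Euclidean nearest neighbour of $x_j$ in $\mathcal{V}$, we get $\Vert \nu_{k^\ast(j)} - x_j\Vert_2 \leqslant \Vert \nu_{x_j} - x_j\Vert_2 \leqslant \delta$. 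Plugging this into the triangle-inequality bound closes the proof. I expect the appendix to either state the normalisation assumption explicitly or handle the inner-product vs.\ Euclidean mismatch by an analogous spherical-cover argument.
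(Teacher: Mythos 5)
Your proof is correct but takes a genuinely different and more elementary route than the paper's. The appendix proof never touches the explicit form of $\pi^\ast$: it writes the squared error as a difference of means of the linear functional $f(x)=x^\T(\sum_i z^\ast_i\nu_i - \sum_j\tfrac{1}{n}x_j)$, bounds that by the MMD between $(z^\ast,\mathcal{V})$ and $(\tfrac{1}{n}\1,X)$, invokes $\mathrm{MMD}\leqslant\mathrm{OT}$ for unit-norm test functions, and finally asserts that the $\delta$-cover forces $\Vert(z^\ast,\mathcal{V})-(\tfrac{1}{n}\1,X)\Vert_{OT}\leqslant\delta$. Your argument --- rewrite $\sum_i z^\ast_i\nu_i$ as $\tfrac{1}{n}\sum_j\nu_{k^\ast(j)}$ and apply the triangle inequality term by term --- reaches the same bound with none of that machinery, and is arguably sharper in exposition. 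More importantly, you correctly surface the one step both proofs actually hinge on: the assignment $k^\ast(j)$ is chosen by maximum inner product while the cover is Euclidean, so one needs $\Vert\nu_{k^\ast(j)}-x_j\Vert_2\leqslant\delta$ via a normalization (or equivalent) assumption. The paper hides this same dependence inside the claim $\mathrm{OT}\leqslant\delta$, which is not a consequence of the $\delta$-cover alone --- the OT coupling is constrained to have marginal $z^\ast$, and $z^\ast$ places mass on prototypes according to the inner-product assignment, so a prototype receiving mass must actually be within $\delta$ of the features assigned to it for the feasible plan $\pi^\ast$ to certify the bound. Your version makes this explicit and repairs it; the paper's version glosses over it.
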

We visualize the result of \Cref{thm:gap_proto} in \cref{fig:motive}, which implies that with GAP each image is represented as the convex combination of the prototype vectors. To generalize DML to unseen classes, we want the prototypes to represent transferable entities such as \emph{"tire"} and \emph{"window"} rather than classes themselves (\eg, \emph{"car"}). To enforce that, we first formulate \emph{histogram operator} as a trainable layer by smoothing the objective of \eqref{eq:hist_operator} with entropy:
\begin{equation} \label{eq:trainable_hist_operator}
(z^\prime,\pi^\prime) =   \!\argmax_{\substack{\Sigma_i\pi_{ij}{=}\nicefrac{1}{n}\\\Sigma_{j}\pi_{ij}{=}z_i\\z\in\mathcal{S}^m, \pi>0}}
 \!\textstyle\sum_{ij}\nu_i\T x_j\pi_{ij} - \tfrac{1}{\varepsilon}\textstyle\sum_{ij}\pi_{ij}\log \pi_{ij}
\end{equation}
which admits \emph{soft-max} solution as: $z^\prime_i=\tfrac{1}{n}\Sigma_j\tfrac{\mathrm{exp}(\varepsilon\nu_i\T x_j)}{\Sigma_k\mathrm{exp(\varepsilon\nu_k\T x_j)}}$. Thus, it can be implemented with $1{\sxtimes}1$ convolution and \emph{soft-max} layers (\cref{fig:method}). In the following sections, we derive a loss to regularize the learning of the prototypes.

\subsection{Learning the Prototypes}
\label{sec:rls}
Given $Z{=}[z_i]_i$ and $Y{=}[y_i]_i$ denoting the histograms obtained by \eqref{eq:trainable_hist_operator} and GAP representations of a batch, respectively, we can learn the prototypes jointly with the embedding function by augmenting $\Vert \mathcal{V}Z \shortminus Y \Vert_F^2$ to the DML loss. However, that does not guarantee transferable representations. We now alternatively express the learning mechanism of the prototypes as a recursive process and derive a loss to regularize the learning.

Let $(Z_1,Y_1), (Z_2,Y_2),\ldots, (Z_K,Y_K)$ be the representations we obtain during the course of $K$-step training. We can obtain $\mathcal{V}^{(K)}$, \ie, the prototypes at $K$, as the solution of the following problem:
\begin{equation}
    \mathcal{V}^{(K)}=\argmin_A\textstyle\sum_{i=1}^K \alpha^{K-i}\Vert A\,Z_i - Y_i\Vert_F^2 + \beta\Vert A\Vert_F^2
\end{equation}
where $0<\alpha\leqslant 1$ is the forgetting factor to put more emphasis on the recent representations, and $\beta\Vert A\Vert_F^2$ is to improve robustness. We can obtain the solution as \cite{hayes1996statistical}:
\begin{equation}
    \mathcal{V}^{(K)} = R_K^{\shortminus 1} Q_K
\end{equation}
where $R_K =\Sigma_i\alpha^{K-i}Z_iZ_i\T+\beta I$ and $Q_K = \Sigma_i\alpha^{K-i}Z_iY_i\T$. For a new batch $(Z,Y)$ at step $K+1$, we can update the solution as:
\begin{equation}\label{eq:rls_proto}
    \mathcal{V}^{(K+1)} = W_K\mathcal{V}^{(K)} + (I - W_K)\mathcal{V}
\end{equation}
where $\mathcal{V} = \argmin_A \Vert A\,Z-Y\Vert^2_F+(1\shortminus \alpha)\beta\Vert A\Vert_F^2$ is the prototypes fitted to the current batch as $\mathcal{V}=R^{\shortminus 1}Z\,Y\T$ with $R^{\shortminus 1}=Z\,Z\T+(1\shortminus\alpha)\beta I$, and $W_K=R^{\shortminus 1}(R_K^{\shortminus 1}+\alpha R^{\shortminus 1})^{\shortminus 1}$. The results mainly come from \emph{Woodbury identity} similar to derivation of \emph{RLS} filter \cite{hayes1996statistical}. 

Practically, learning prototypes with gradient descent of $\Vert \mathcal{V}Z \shortminus Y \Vert_F^2$ is more appealing. That said, the form of the recursive update in \eqref{eq:rls_proto} reveals that the learned prototypes are the \emph{weighted} combinations of the prototypes fitted to the batch of samples. Thus, imposing constraints on per-batch-fitted prototypes can be a decisive step to obtain a batch-based regularization loss. In the following section, we build on that perspective to formulate our loss to regularize prototype learning.

\subsection{Cross-batch Metric Learning}
\label{sec:xml}
The formulation in \eqref{eq:rls_proto} reinterprets the learning mechanism of prototypes, that is based on iteratively fitting prototypes to batch of samples $(Z, Y)$ as:
\begin{equation}\label{eq:batch_ml}
    \mathcal{V} = \argmin_A \Vert A\,Z - Y \Vert_F^2 + \epsilon \Vert A \Vert_F^2
\end{equation}
Assuming that representations in $Y$ are consistent with the label information, expression in \eqref{eq:batch_ml} is equivalent to solving a metric learning problem for $(Z,Y)$ tuples \cite{perrot2015regressive}. We now exploit this observation to derive our loss.

We first split the batch $(Z, Y)$ into two as $(Z_1, Y_1)$ and $(Z_2, Y_2)$ such that class sets of the two batches are disjoint. Similar to \eqref{eq:rls_proto}, we express \eqref{eq:batch_ml} as:
\begin{equation}
    \mathcal{V} = W\mathcal{V}_1 + (I-W)\mathcal{V}_2
\end{equation}
where $\mathcal{V}_k = \argmin_A \Vert A\,Z_k - Y_k \Vert_F^2 + \nicefrac{\epsilon}{2} \Vert A \Vert_F^2$, and $W=R_2^{\shortminus 1}(R_1^{\shortminus 1}+ R_1^{\shortminus 2})^{\shortminus 1}$ with $R_k = Z_k\,Z_k^T + \nicefrac{\epsilon}{2}I$. Hence, we express the learning mechanism at each batch as the \emph{weighted} combination of the two metrics fitted to the different sets of classes, that sets the stage for the rest of the formulation. 

Consider the prototypes $\mathcal{V}_1$ fitted to $(Z_1,Y_1)$. If those prototypes, $\mathcal{V}_1$, are corresponding to transferable entities, then their combination with the weights in $Z_2$ should yield embeddings that are consistent with the label information. Specifically, $\hat{Y}_2 = \mathcal{V}_1\,Z_2$ should also minimize DML loss.

Formally, given a batch $(Z{=}[Z_1\,Z_2], Y{=}[Y_1\,Y_2])$, we first obtain the prototypes as $\mathcal{V}_k=(Z_k\,Z_k\T+\epsilon I)^{\shortminus 1}Z_k\,Y_k\T$ for $k{\in}\lbrace1,2\rbrace$ or equivalently $\mathcal{V}_k=Y_k(Z_k\T Z_k+\epsilon I)^{\shortminus 1}Z_k\T$, if the batch size is less than the number of prototypes for computational efficiency. Given a DML loss function $\ell((y_i,L_i), (y_j,L_j))$, \eg, \emph{contrastive} \cite{wu2017sampling}, we formulate our \emph{cross-batch metric learning} (XML) loss as:
\begin{equation}\label{eq:xml_loss}
\mathcal{L}_{\text{XML}} = \textstyle\sum\limits_{k}\textstyle\sum\limits_{\hat{y}_i,\hat{y}_j\in \hat{Y}_k}\ell((\hat{y}_i,L_i), (\hat{y}_j,L_j))
\end{equation}
 for $k{=}1,2$ where $\hat{Y}_1 = \mathcal{V}_2\,Z_1$ and $\hat{Y}_2 = \mathcal{V}_1\,Z_2$. In words, we solve a metric learning problem for a set of classes and then compute its performance on another set of unseen classes. Having closed form solution for $\mathcal{V}_k$ in terms of $(Z_k, Y_k)$ enables us to express the metric learning problem as a differentiable operation. Hence, unseen class performance can be explicitly enforced through a batch-based loss term (\ie,  $\mathcal{L}_{\text{XML}}$) that can be jointly optimized with gradient descent of any DML loss. In particular, we combine this loss with the metric learning loss as:
 \begin{equation}
     \mathcal{L} = (1{\shortminus}\lambda)\mathcal{L}_{\text{DML}} + \lambda\mathcal{L}_{\text{XML}}
 \end{equation}
The proposed loss assesses the unseen class generalization performance of locally fitted prototypes. Intuitively, such a regularization in learning should be useful in better generalization of the CNN features as well as GAP embeddings since prototypes are connected to CNN features and GAP embeddings through analytical operations.

% EXPERIMENTS
% -----------
\section{Experimental Work}
We start our empirical study with evaluations on DML benchmarks to show the effectiveness of XML. We extend our study further to validate the role of XML in learning. 

\subsection{Deep Metric Learning Experiments}
% Please add the following required packages to your document preamble:
% \usepackage{booktabs}
% \usepackage{graphicx}
% \usepackage[table,xcdraw]{xcolor}
% If you use beamer only pass "xcolor=table" option, i.e. \documentclass[xcolor=table]{beamer}
\begin{table}[t]
\centering
\caption{Evaluation on 4 DML benchmarks with 2 widely-acknowledged settings. Red: the best. Blue: the second best. Bold: improvement over baseline.}
\label{tab:dml_eval}
\resizebox{\columnwidth}{!}{%
\begin{tabular}{@{}lcccccccc@{}}
\toprule
Dataset$\rightarrow$        & \multicolumn{2}{c}{\textbf{SOP}}                                              & \multicolumn{2}{c}{\textbf{InShop}}                                           & \multicolumn{2}{c}{\textbf{CUB}}                                              & \multicolumn{2}{c}{\textbf{Cars}}                                             \\ \cmidrule(lr){2-3} \cmidrule(lr){4-5} \cmidrule(lr){6-7} \cmidrule(lr){8-9} 
Setting$\rightarrow$        & \multicolumn{8}{c}{\textbf{BNInception - MLRC Evaluation (MAP@R)}}                                                                                                                                                                                                                                                            \\ \cmidrule(l){2-9} 
\textbf{Method}$\downarrow$ & 512D                                  & 128D                                  & 512D                                  & 128D                                  & 512D                                  & 128D                                  & 512D                                  & 128D                                  \\ \midrule
Triplet\cite{schroff2015facenet}                     & 45.88                                 & 40.01                                 & {\color[HTML]{3166FF} \textbf{59.67}} & 54.25                                 & 23.65                                 & 18.54                                 & 22.67                                 & 15.74                                 \\
MS\cite{Wang_2019_CVPR_MS}                          & 44.19                                 & 40.34                                 & 58.79                                 & 54.85                                 & 24.95                                 & 20.13                                 & {\color[HTML]{3166FF} \textbf{27.16}} & 18.73                                 \\
PNCA++\cite{teh2020proxynca++}                      & 47.11                                 & 43.57                                 & 57.58                                 & 54.41                                 & 25.27                                 & 20.63                                 & 26.21                                 & 18.61                                 \\ \midrule
\textbf{C}ontrastive\cite{wu2017sampling}                 & 45.85                                 & 41.79                                 & 59.07                                 & 55.38                                 & 25.95                                 & 20.58                                 & 24.38                                 & 17.02                                 \\
\textbf{C+XML}              & \textbf{46.84}                        & \textbf{42.73}                        & {\color[HTML]{FE0000} \textbf{59.75}} & {\color[HTML]{FE0000} \textbf{55.63}} & {\color[HTML]{FE0000} \textbf{27.58}} & {\color[HTML]{FE0000} \textbf{22.03}} & \textbf{26.33}                        & \textbf{18.31}                        \\ \midrule
\textbf{PA}nchor\cite{kim2020proxy}                     & {\color[HTML]{3166FF} \textbf{48.08}} & {\color[HTML]{3166FF} \textbf{44.33}} & 58.02                                 & 54.98                                 & 26.20                                 & 20.94                                 & 27.14                                 & {\color[HTML]{3166FF} \textbf{19.15}} \\
\textbf{PA+XML}             & {\color[HTML]{FE0000} \textbf{49.16}} & {\color[HTML]{FE0000} \textbf{45.15}} & \textbf{58.66}                        & {\color[HTML]{3166FF} \textbf{55.46}} & {\color[HTML]{3166FF} \textbf{26.51}} & {\color[HTML]{3166FF} \textbf{21.34}} & \textbf{27.56}                        & {\color[HTML]{FE0000} \textbf{19.28}} \\ \midrule
Setting$\rightarrow$        & \multicolumn{8}{c}{\textbf{ResNet50 - Conventional Evaluation (R@1)}}                                                                                                                                                                                                                                                         \\ \midrule
PAnchor\cite{kim2020proxy}                     & \multicolumn{2}{c}{80.00}                                                     & \multicolumn{2}{c}{92.10}                                                     & \multicolumn{2}{c}{69.70}                                                     & \multicolumn{2}{c}{87.70}                                                     \\
MS+Metrix\cite{venkataramanan2022it}                   & \multicolumn{2}{c}{81.00}                     & \multicolumn{2}{c}{92.20}                     & \multicolumn{2}{c}{{\color[HTML]{3166FF} \textbf{71.40}}}                                                     & \multicolumn{2}{c}{{\color[HTML]{FE0000} \textbf{89.60}}}                                                     \\ \midrule
LIBC\cite{intrabatch}                        & \multicolumn{2}{c}{{\color[HTML]{3166FF} \textbf{81.40}}}                                                     & \multicolumn{2}{c}{{\color[HTML]{3166FF} \textbf{92.80}}}                                                     & \multicolumn{2}{c}{70.30}                     & \multicolumn{2}{c}{88.10}                     \\
LIBC\textbf{+XML}           & \multicolumn{2}{c}{{\color[HTML]{FE0000} \textbf{81.48}}}                     & \multicolumn{2}{c}{{\color[HTML]{FE0000} \textbf{93.04}}}                     & \multicolumn{2}{c}{{\color[HTML]{FE0000} \textbf{70.49}}}                     & \multicolumn{2}{c}{{\color[HTML]{3166FF} \textbf{88.38}}}                     \\ \bottomrule
\end{tabular}%
}
\end{table}

\textbf{Setup.} We evaluate our method on CUB \cite{wah2011caltech}, Cars \cite{krause2014submodular}, InShop \cite{liu2016deepfashion}, and SOP \cite{oh2016deep}. Minimizing the confounding of factors other than our proposed method, we keep the comparisons as fair as possible following the MLRC  \cite{musgrave2020metric} procedures with BNInception embeddings \cite{normalization2015accelerating}. We additionally evaluate XML following the conventional settings \cite{intrabatch} with ResNet50 \cite{he2016identity} embeddings. For XML, $\varepsilon{=}10$, $\lambda{=}0.01$, $\epsilon{=}0.05$, and $m{=}64$ in CUB\&Cars, and $m{=}128$ in SOP\&InShop, based on our empirical analysis.  

\textbf{Results.} We apply XML with \emph{contrastive} \cite{wu2017sampling}  (C+ XML) and \emph{ProxyAnchor} \cite{kim2020proxy} (PA+XML) losses in MLRC setting, and with \emph{LIBC} \cite{intrabatch} in conventional setting. For MLRC, we report average (128D) and concatenated (512D) model MAP@R \cite{musgrave2020metric} performance, and R@1 for the conventional evaluation in \cref{tab:dml_eval} (higher the better). We observe consistent improvements upon direct application of DML losses in all datasets and boost state-of-the-art.

\subsection{Proof of the Concept}
\label{sec:ablation}
For the following, we perform DML trainings with XML on Cifar10 \cite{cifar} dataset using ResNet20 \cite{he2016identity} architecture.

% Below is an example of how to insert images. Delete the ``\vspace'' line,
% uncomment the preceding line ``\centerline...'' and replace ``imageX.ps''
% with a suitable PostScript file name.
% -------------------------------------------------------------------------
%\begin{wrapfigure}[10]{l}[-5pt]{0.47\linewidth}
%\vspace{-.3\intextsep}
%\begin{minipage}{.49\linewidth}
\begin{figure}[!ht]
  \centering
  \centerline{\includegraphics[width=1.0\linewidth,keepaspectratio]{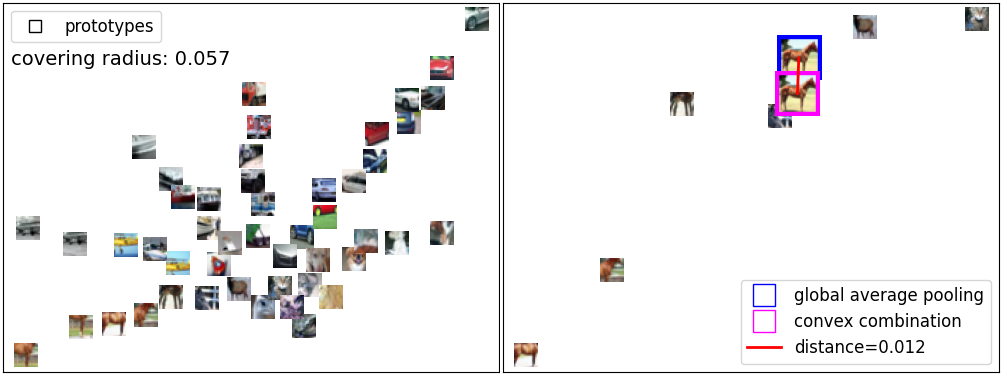}}
  %\smallskip
  %\centerline{(b)}
  %\end{figure}
%\end{minipage}
\caption{Prototypes with their covering radius (left), and  GAP and PCC embedding of a sample (right) with the assigned prototypes.}
\label{fig:prototypes}
\end{figure}
%\end{wrapfigure}

\textbf{GAP and prototypes.} To empirically verify  \cref{thm:gap_proto}, we use 2D feature embeddings for direct visualization. We sample 64 images from each class and obtain the local CNN features as well as the GAP features. We compute $48$-many prototypes among the local features using \textit{greedy k-center} \cite{gurbuz2021asap}. We plot the prototypes in \cref{fig:prototypes} where we see that prototypes correspond to generalizable semantic entities. We also provide the covering radius (\ie, $\delta$ in $\delta$-cover) of the prototype set and the discrepancy between GAP and prototype convex combination (PCC) embeddings, which is less than $\delta$ as \cref{thm:gap_proto} claims.

\textbf{Prototypes with XML}. We test the impact of XML on learned prototypes by performing DML on Cifar10 with 8 prototypes. We compare results with and without $\mathcal{L}_{\text{XML}}$ and visualize the prototype histograms for each class in \cref{fig:xml}. With $\mathcal{L}_{\text{XML}}$, we observe transferable representations and that the prototypes are fit to transferable entities while they are fit to classes without it. For instance, XML prototypes represent a \emph{"car"} in terms of parts and use some of them in the representation of \emph{"cat"} as well. We quantitatively evaluate this behavior by randomly splitting the classes in half and using cross-batch metric learning in \cref{sec:xml}. Our evaluation shows that the features and prototypes with XML have superior unseen class generalization ($MAP_x$) while the seen class performances ($MAP_c$) are similar. We repeated the experiment 1000 times to ensure validity.

% Below is an example of how to insert images. Delete the ``\vspace'' line,
% uncomment the preceding line ``\centerline...'' and replace ``imageX.ps''
% with a suitable PostScript file name.
% -------------------------------------------------------------------------
%\begin{wrapfigure}[10]{r}[-5pt]{0.47\linewidth}
%\vspace{-.3\intextsep}
%\begin{minipage}{.49\linewidth}
\begin{figure}[!t]
  \centering
  \centerline{\includegraphics[width=.8\linewidth,keepaspectratio]{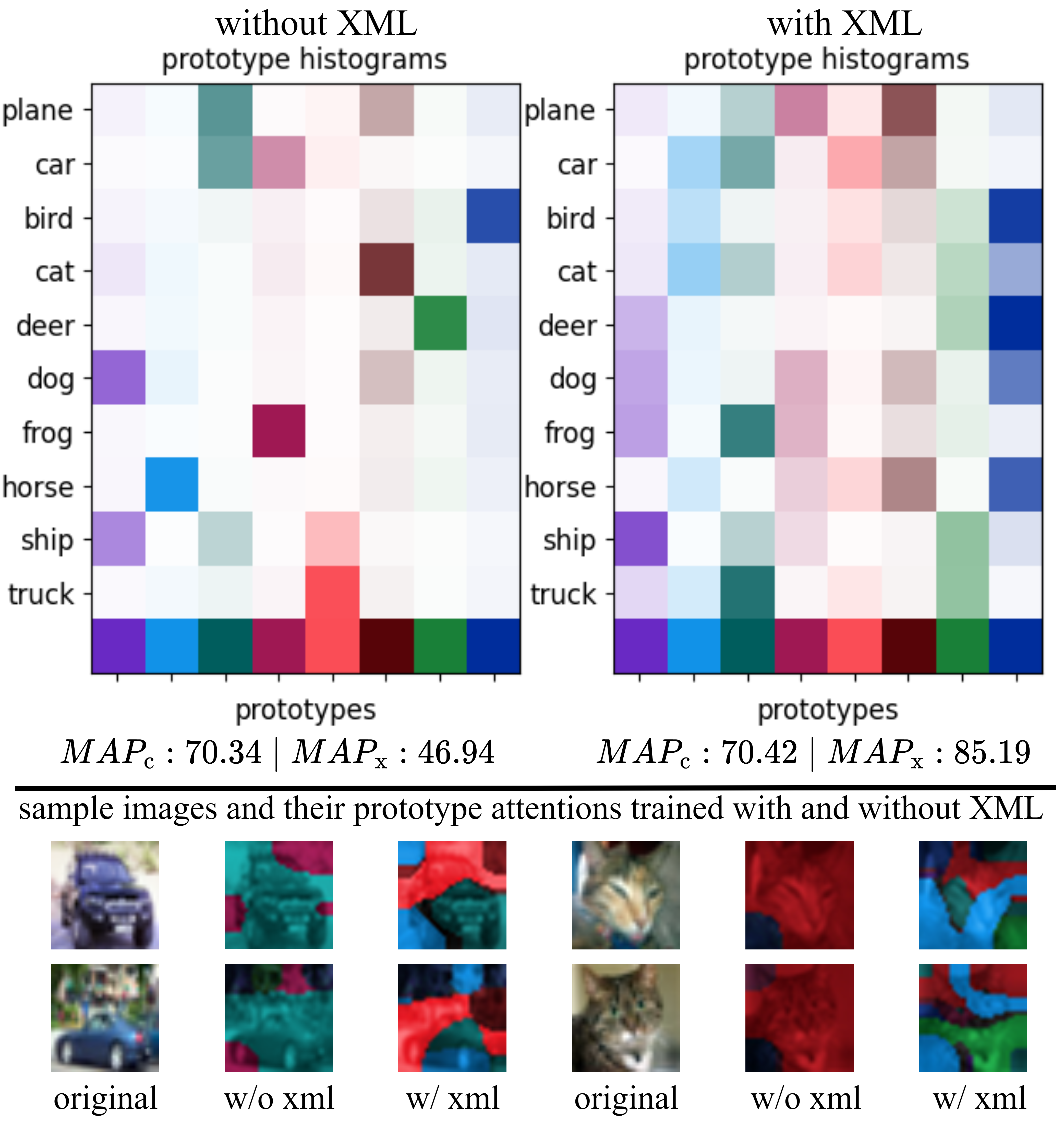}}
  %\smallskip
  %\centerline{(b)}
  %\end{figure}
%\end{minipage}
\caption{The distributions of the learned 8 prototypes across classes of Cifar10 dataset with and without $\mathcal{L}_{\text{XML}}$. Attention maps are coloured according to the dominant prototype at that location. $MAP_c$ denotes the performance of metric learning fitted to all classes and $MAP_x$ denotes the cross-class performance, \ie metric learning fitted to 5 classes and evaluated on the other 5 classes.}
\label{fig:xml}
\end{figure}
%\end{wrapfigure}
\section{Conclusion}
Building on the perspective explaining GAP as the convex combination of prototypes, we formulated learning of the prototypes and proposed cross-batch metric learning loss to regularize the learning for transferable prototypes. With extensive empirical studies, we validated the effectiveness of our method in various DML benchmarks.

% References
% -----------
\bibliographystyle{IEEEbib-abbrv}
\bibliography{main.bbl}
%{\footnotesize
%\bibliography{library}}

\appendix
\renewcommand{\theequation}{A.\arabic{equation}}
% Appendix
% -------------
\section*{Appendix}
\label{sec:appendix}
\subsection*{Preliminaries}
\begin{apdefinition}[\textbf{Optimal Transport Distance}]\label{def:ot} The optimal transport (OT) distance between two probability mass distributions $(p,X)$ and $(q,Y)$ is:
\begin{equation} \label{eq:OT_distance}
\Vert (p,X) \shortminus (q,Y) \Vert_{OT} =   \!\!\!\min_{\substack{\pi\geqslant0 \\
\Sigma_i\pi_{ij}=q_j\\
\Sigma_{j}\pi_{ij}=p_i}}
 \!\!\textstyle\sum_{ij}c_{ij}\pi_{ij}
\end{equation}
where $c_{ij}=\Vert x_i \shortminus y_j \Vert_2$, and $(p,X)\in \Sigma_n \times \R^{d\times n}$ denotes a probability mass distribution with masses $p\in \Sigma_n$ in the probability simplex (\ie, $\Sigma_n \coloneqq \lbrace p \in \R^{n}_{\geqslant 0}\mid \textstyle\sum_i p_i = 1  \rbrace$), and $d$-dimensional support $X=[x_i]_{i\in[n]}\in\R^{d\sxtimes n}$.
\end{apdefinition}

\begin{apdefinition}[\textbf{Maximum Mean Discrepancy}] Max-imum mean discrepancy (MMD)  between two probability mass distributions $(p,X)$ and $(q,Y)$ is:
\begin{equation} \label{eq:MMD_distance}
\Vert (p,X) \shortminus (q,Y) \Vert_{MMD} =  \!\! \max_{f\in \mathcal{C}(X,Y)}
 \textstyle\sum_i p_i f(x_i) - \textstyle\sum_j q_j f(y_j)
\end{equation}
where $\mathcal{C}(X,Y)$ is the set of continuous and bounded functions defined on a set covering the column vectors of $X$ and $Y$.
\end{apdefinition}

\begin{apdefinition}[\textbf{Optimal Transport Distance Dual}] The Lagrangian dual of the optimal transport distance defined in \Cref{def:ot} reads:
\begin{equation} \label{eq:OT_distance_dual}
\Vert (p,X) \shortminus (q,Y) \Vert_{OT} =   \!\!\!\max_{f_i + g_j \leqslant c_{ij}}
 \!\!\textstyle\sum_i p_i f_i + \textstyle\sum_j q_j g_j
\end{equation}
with the dual variables $\lambda=\lbrace f,g\rbrace$.
\end{apdefinition}

Note that $x_i=y_j$ implies $f_i = -g_j$ and from the fact that $c_{ij}=c_{ji}$, we can express the problem in \eqref{eq:OT_distance_dual} as:
\begin{equation} \label{eq:OT_distance_dual_2}
\Vert (p,X) \shortminus (q,Y) \Vert_{OT} =   \max_{f\in \mathfrak{L}_1}
 \textstyle\sum_i p_i f(x_i) - \textstyle\sum_j q_j f(x_j)
\end{equation}
where $\mathfrak{L}_1=\lbrace  f \mid \sup\limits_{x,y}\tfrac{\vert f(x) - f(y)\vert}{\Vert x - y\Vert_2}\leqslant 1 \rbrace$ is the set of 1-Lipschitz functions.

\subsection*{Proofs}
\begin{apdefinition}[\textbf{Histogram Operator}] \label{def:hist_operatorA} For $n$-many $d$-dimensional features $X{=}[ x_i{\in}\R^d ]_{i=1}^n$ and $m$-many prototype features $\mathcal{V}=[ \nu_i{\in}\R^d ]_{i=1}^m$ of the same dimension, the histogram of $X$ on $\mathcal{V}$ is denoted as $z^\ast$ which is computed as the minimizer of the following problem:
\begin{equation} \label{eq:hist_operatorA}
(z^\ast,\pi^\ast) =   \!\argmax_{z\in\mathcal{S}^m, \pi\geqslant0}
 \!\textstyle\sum_{ij}\nu_i\T x_j\pi_{ij} \text{ s.to }
{\small\begin{array}[t]{l} \Sigma_i\pi_{ij}{=}\nicefrac{1}{n}\\[-1ex]\Sigma_{j}\pi_{ij}{=}z_i \end{array}}
\end{equation}
where $\mathcal{S}^m \coloneqq \lbrace p \in \R^{m}_{\geqslant 0}\mid \Sigma_i p_i = 1  \rbrace$.
\end{apdefinition}

\begin{apclaim}
The solution of the problem in \eqref{eq:hist_operatorA} reads:
\begin{equation}
    \pi^\ast_{ij} = \nicefrac{1}{n}\mathds{1}({i = \mathrm{argmax}_k\lbrace \nu_k\T x_j \rbrace})
\end{equation}
where $\mathds{1}(c)$ is 1 whenever $c$ is true and 0 otherwise.
\end{apclaim}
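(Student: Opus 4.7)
The plan is to eliminate the auxiliary variable $z$ and reduce the problem to a column-wise linear program on the simplex. The argument has essentially two steps and no hard part.

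First, I would observe that the simplex constraint $z \in \mathcal{S}^m$ is redundant: combining $\pi \geq 0$, $\Sigma_i \pi_{ij} = \nicefrac{1}{n}$, and $\Sigma_j \pi_{ij} = z_i$ already forces $z_i \geq 0$ and $\Sigma_i z_i = \Sigma_{ij}\pi_{ij} = 1$. Hence $z$ may be eliminated by setting $z_i := \Sigma_j \pi_{ij}$, leaving a maximization over $\pi \geq 0$ subject only to the column-sum constraints $\Sigma_i \pi_{ij} = \nicefrac{1}{n}$ for every $j$.

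Next, I would note that both the objective $\Sigma_{ij}(\nu_i\T x_j)\pi_{ij}$ and the remaining constraints decouple across the columns of $\pi$. For each fixed $j$, the subproblem is a linear function maximized over the scaled simplex $\{\pi_{\cdot j}\geq 0 : \Sigma_i \pi_{ij} = \nicefrac{1}{n}\}$, whose optimum is attained at the vertex that places all mass $\nicefrac{1}{n}$ on an index achieving $\argmax_k \nu_k\T x_j$. Collecting these column-wise optima yields exactly the claimed formula $\pi^\ast_{ij} = \nicefrac{1}{n}\,\mathds{1}(i = \argmax_k \nu_k\T x_j)$, and the induced histogram $z^\ast_i = \nicefrac{1}{n}\,|\{j : i = \argmax_k \nu_k\T x_j\}|$ matches the ``assign each feature to its nearest prototype'' interpretation given right after the claim.

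The only minor caveat is tie-breaking when $\argmax_k \nu_k\T x_j$ is not unique, in which case any convex combination of the tied vertices is also optimal; the stated $\pi^\ast$ should then be read as one optimizer under a fixed tie-breaking rule (e.g.\ lowest-index maximizer), which does not change the optimal value. I do not expect any genuine obstacle: the whole argument is a standard simplex LP after a one-line substitution.
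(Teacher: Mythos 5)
Your proof is correct and follows essentially the same route as the paper's: both reduce to the observation that, column by column, the objective is a linear function over the scaled simplex $\lbrace \pi_{\cdot j}\geqslant 0 : \Sigma_i\pi_{ij}=\nicefrac{1}{n}\rbrace$ and is therefore maximized by placing all mass $\nicefrac{1}{n}$ on a maximizing index; the paper merely phrases this vertex-optimality fact as a proof by contradiction rather than as a direct argument. Your explicit tie-breaking caveat is a welcome addition, since the paper's strict inequality $(c_{i^\prime j}-c_{i^\ast j})>0$ tacitly assumes the maximizer is unique.
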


\begin{proof}
We prove our claim by contradiction. Denoting $c_{ij}= - \nu_i\T x_j$, for any $j$, we express a solution as $\pi^\ast_{ij}=\epsilon_i$ with $\epsilon_i \geqslant 0$ and $\sum_i \epsilon_i = \nicefrac{1}{n}$. Let $i^\ast = \argmin_k\lbrace c_{kj}\rbrace$. We can write $\pi^\ast_{i^\ast j}=\nicefrac{1}{n} - \sum_{i\mid i\neq i^\ast} \epsilon_i$. Our claim states that $\epsilon_i =0$ for $i\neq i^\ast$. We assume an optimal solution, $\pi^\prime$, with $\epsilon_i > 0$ for some $i\neq i^\ast$. Since $\pi^\prime$ is optimal, we must have $\sum_{ij}\pi^\prime_{ij}c_{ij}\leqslant \sum_{ij}\pi_{ij}c_{ij} $ for any $\pi$. For the $j^{th}$ column we have,
\begin{equation*}
\begin{split}
        \textstyle\sum_i\pi^\prime_{ij}c_{ij} &= (\tfrac{1}{n}-\textstyle\sum\limits_{i^\prime\mid i^\prime\neq i^\ast}\epsilon_{i^\prime}) c_{i^\ast j} + \textstyle\sum\limits_{i^\prime\mid i^\prime\neq i^\ast}\epsilon_{i^\prime} c_{i^\prime j} \\
        & = \tfrac{1}{n} c_{i^\ast j} + \textstyle\sum\limits_{i^\prime\mid i^\prime\neq i^\ast}\epsilon_{i^\prime} (c_{i^\prime j}-c_{i^\ast j}) \overset{(a)}{>} \textstyle\sum_i\pi^\ast_{ij}c_{ij}
\end{split}
\end{equation*}
where in $(a)$ we use the fact that $(c_{i^\prime j}-c_{i^\ast j}) > 0$ and $\epsilon_{i^\prime}>0$ for some $i^\prime$ by the assumption. Hence, $\textstyle\sum_{ij}\pi^\prime_{ij}c_{ij} > \textstyle\sum_{ij}\pi^\ast_{ij}c_{ij}$ poses a contradiction. Therefore, $\epsilon_{i^\prime}=0$ must hold for all $i^\prime \neq i^\ast$.
\end{proof}

\begin{aplemma}\label{thm:gap_protoA}
Given $n$-many convolutional features $X=[ x_i{\in}\mathcal{X}]_{i=1}^n$, and $m$-many prototype features $\mathcal{V}=[ \nu_i]_{i=1}^m$ with $\lbrace \nu_i\rbrace_{i=1}^m$ being $\delta$-cover of $\mathcal{X}$. If $z^\ast$ is the histogram  of $X$ on $V$, defined in \eqref{eq:hist_operatorA}, then we have:
$$ \Vert \textstyle\sum_{i=1}^m z^\ast_i\nu_i \shortminus \textstyle\sum_{j=1}^n \tfrac{1}{n} x_j \Vert_2 \leqslant \delta $$
\end{aplemma}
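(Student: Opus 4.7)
The plan is to use the Claim's explicit characterization of $\pi^{\ast}$ to reduce the lemma to a per-sample invocation of the $\delta$-cover hypothesis. First, I would perform an ``OT-style'' rewriting of both terms in the lemma in terms of $\pi^{\ast}$: the marginal $\sum_i \pi^{\ast}_{ij}=\nicefrac{1}{n}$ gives $\tfrac{1}{n}\sum_j x_j = \sum_{ij}\pi^{\ast}_{ij}\,x_j$, and the marginal $z^{\ast}_i=\sum_j \pi^{\ast}_{ij}$ gives $\sum_i z^{\ast}_i\,\nu_i = \sum_{ij}\pi^{\ast}_{ij}\,\nu_i$. Subtracting and applying the triangle inequality,
\[
\Bigl\| \textstyle\sum_{i=1}^m z^{\ast}_i\,\nu_i \shortminus \tfrac{1}{n}\textstyle\sum_{j=1}^n x_j \Bigr\|_2 \;\leqslant\; \textstyle\sum_{i,j}\pi^{\ast}_{ij}\,\|\nu_i \shortminus x_j\|_2.
\]

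Next, I would substitute the closed form $\pi^{\ast}_{ij}=\tfrac{1}{n}\mathds{1}(i = i^{\ast}(j))$ with $i^{\ast}(j)=\argmax_k \nu_k\T x_j$ furnished by the Claim. Only one term per column $j$ survives, collapsing the right-hand side to $\tfrac{1}{n}\sum_{j=1}^n \|\nu_{i^{\ast}(j)} \shortminus x_j\|_2$. To bound each summand, I would invoke the (standard in DML and implicit throughout the paper) convention that features and prototypes are $\ell_2$-normalized. On the unit sphere $\argmax_k \nu_k\T x_j = \argmin_k \|x_j \shortminus \nu_k\|_2$, so $\nu_{i^{\ast}(j)}$ is the Euclidean-nearest prototype to $x_j$, and the $\delta$-cover property then gives $\|\nu_{i^{\ast}(j)} \shortminus x_j\|_2 \leqslant \delta$. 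Averaging over $j$ closes the bound.

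The main technical obstacle is precisely this last bridging step: the histogram operator selects prototypes by maximizing an inner product, whereas the $\delta$-cover is phrased in the Euclidean metric, and without normalization the two selectors need not agree (one can easily build a prototype with very large norm that maximizes $\nu_k\T x_j$ while being Euclidean-far from $x_j$). I would handle this by making the normalization convention explicit; a clean alternative is to restate the cover in terms of the inner-product dissimilarity $1 \shortminus \nu\T x$, which on the unit sphere coincides with $\tfrac{1}{2}\|\nu \shortminus x\|_2^2$ and transfers to the Euclidean bound for free. Beyond that step, the argument is just marginal bookkeeping plus one triangle inequality and is entirely routine.
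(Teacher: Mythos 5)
Your proof is correct and reaches the stated bound, but by a genuinely different and more elementary route than the paper. The paper argues in the dual: it writes the discrepancy as the action of a linear functional of operator norm at most one, bounds that by the MMD between $(z^\ast,\mathcal{V})$ and $(q,X)$, invokes the inequality $\mathrm{MMD}\leqslant\mathrm{OT}$ for test functions in the unit ball (citing Sriperumbudur et al.), and finally bounds the OT cost by $\delta$. You work in the primal: the two marginal constraints let you rewrite both the GAP mean and the prototype combination as expectations under the coupling $\pi^\ast$, and a single triangle inequality reduces the whole lemma to the transport cost $\sum_{ij}\pi^\ast_{ij}\Vert\nu_i\shortminus x_j\Vert_2$ of that specific coupling, which collapses to a per-sample nearest-prototype distance. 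This buys a shorter, self-contained argument with no external inequality and no detour through the squared norm (where the paper's bookkeeping between $\Vert\cdot\Vert_2^2$ and $\Vert\cdot\Vert_2$ is in any case a little loose). Both arguments hinge on the same unstated bridge: the histogram operator assigns $x_j$ to the prototype maximizing $\nu_k\T x_j$, while the $\delta$-cover is Euclidean, and the paper's step $\Vert(z^\ast,V)\shortminus(q,X)\Vert_{OT}\leqslant\delta$ likewise needs a feasible coupling with prototype marginal exactly $z^\ast$ and per-pair cost at most $\delta$ --- which is $\pi^\ast$ itself only when the inner-product argmax coincides with the Euclidean nearest prototype. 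You are right that this requires normalized features (or restating the cover in inner-product dissimilarity), and making that hypothesis explicit is an improvement over the paper's presentation rather than a defect of yours.
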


\begin{proof}
We can express
\[\Vert \textstyle\sum\limits_{i\in[m]} z^\ast_i\nu_i \shortminus \textstyle\sum\limits_{j\in[n]} \tfrac{1}{n} x_j \Vert_2^2 = \textstyle\sum\limits_{i\in[m]} p^\ast_i f(\nu_i) \shortminus \textstyle\sum\limits_{j\in[n]} q_j f(x_j) \]
where $f(x) = x\T(\textstyle\sum_i z^\ast_i\nu_i \shortminus \textstyle\sum_j \tfrac{1}{n} x_j)$, and $[n]=1,\ldots,n$. Note that 
$f$ is a continuous bounded operator for $\mathcal{X} =\lbrace x\mid \Vert x \Vert_2 \leqslant 1 \rbrace$ (We can always map the features inside unit sphere without loosing the relative distances). Moreover, the operator norm of $f$, \ie $\Vert f \Vert$, which is $\Vert \textstyle\sum_i z^\ast_i\nu_i \shortminus \textstyle\sum_j \tfrac{1}{n} x_j \Vert_2$ is less than or equal to 1. Thus, $f$ lie in the unit sphere of the continuous bounded functions set. Using the definition of MMD distance, we can bound the error as:
\[\textstyle\sum\limits_{i\in[m]} z^\ast_i f(\nu_i) \shortminus \textstyle\sum\limits_{j\in[n]} q_j f(x_j) \leqslant \Vert (z^\ast, V) \shortminus (q, X) \Vert_{MMD} \]
where $q_i=\nicefrac{1}{n}$ for all $i$. For the continuous and bounded functions of the operator norm less than 1, MMD is lower bound for OT \cite{sriperumbudur2010hilbert}. Namely,
\begin{equation*}
    \begin{split}
    \textstyle\sum\limits_{i\in[m]} z^\ast_i f(\nu_i) \shortminus \textstyle\sum\limits_{j\in[n]} q_j f(x_j) &\leqslant \Vert (z^\ast, V) \shortminus (q, X) \Vert_{MMD} \\
    &\leqslant \Vert (z^\ast, V) \shortminus (q, X) \Vert_{OT}    
    \end{split}
\end{equation*}
Since columns of $V$ is $\delta$-cover of the set $\mathcal{X}$, the optimal transport distance between the two distributions are bounded by $\delta$, \ie $\Vert (z^\ast, V) \shortminus (q, X) \Vert_{OT} \leqslant \delta$. Thus, we finally have:
\[\Vert \textstyle\sum\limits_{i\in[m]} z^\ast_i\nu_i \shortminus \textstyle\sum\limits_{j\in[n]} \tfrac{1}{n} x_j \Vert_2 \leqslant \delta.\]
\end{proof}

\end{document}